%% file: main.tex
\documentclass[11pt]{article}

\usepackage[preprint]{acl}

\usepackage{times}
\usepackage{latexsym}

\usepackage[T1]{fontenc}

\usepackage[utf8]{inputenc}

\usepackage{microtype}

\usepackage{inconsolata}
\usepackage{graphicx}
\usepackage{amsmath}
\usepackage{amsthm}
\usepackage{amssymb}
\usepackage{mathtools}
\usepackage{booktabs}
\usepackage{multirow}
\usepackage{graphicx}
\usepackage{colortbl}

\usepackage{algorithm}
\usepackage{algorithmic}
\usepackage{bm}
\usepackage{xcolor}
\usepackage{letterspace}

\usepackage{enumitem}
\usepackage{tcolorbox}

\definecolor{algblue}{RGB}{0, 80, 180}  
\definecolor{commentgray}{RGB}{100, 100, 100}

\definecolor{ourcolor}{RGB}{230, 240, 245}

\newcommand{\m}{\textsc{T-STAR}}

\newtheorem{lemma}{Lemma}

\title{Reason in Chains, Learn in Trees: Self-Rectification and Grafting for Multi-turn Agent Policy Optimization}

\author{Yu Li \quad Sizhe Tang \quad Tian Lan\textsuperscript{*} \\
  Department of Electrical and Computer Engineering,
  George Washington University \\
  \texttt{\{yul,tlan\}@gwu.edu} \\
  \textsuperscript{*}Corresponding author
}

\begin{document}
\maketitle

\input{section/0_abstract}
\input{section/1_introduction}

\input{section/2_rw}

\input{section/3_method}

\input{section/4_result}
\input{section/5_conclusion}

\newpage
\section*{Limitations}
The functional equivalence criterion based on KL divergence relies on Monte Carlo approximation, which may introduce noise for states with high-entropy action distributions
but can also be addressed by adaptive sampling strategies in future work. 
Additionally, thought grafting requires the agent to generate meaningful corrections conditioned on contrasting branches, which depends on the base model's ability to identify and articulate reasoning differences. Finally, our evaluation focuses on text-based environments with discrete actions; extending \m~to continuous action spaces or multimodal observations would be interesting topics for future exploration.

\bibliography{ref}

\input{section/x_appendix}

\end{document}

%% file: section/0_abstract.tex
\begin{abstract}
Reinforcement learning for Large Language Model agents is often hindered by sparse rewards in multi-step reasoning tasks. Existing approaches like Group Relative Policy Optimization treat sampled trajectories as independent chains, assigning uniform credit to all steps in each chain and ignoring the existence of critical steps that may disproportionally impact reasoning outcome. 
In this paper, we propose \m~(Tree-structured Self-Taught Agent Rectification), a framework that recovers the latent correlated reward structure across seemingly independent trajectories. Specifically, we consolidate trajectories into a unified Cognitive Tree by identifying and merging functionally similar steps/nodes. It enables an Introspective Valuation mechanism that back-propagates trajectory-level rewards through the tree to obtain a new notion of variance-reduced relative advantage at step-level. Using the Cognitive Tree, we also develop In-Context Thought Grafting to synthesize corrective reasoning by contrasting successful and failed branches at critical divergence points/steps. Our proposed Surgical Policy Optimization then capitalizes on the rich policy gradient information concentrated at these critical points/steps through a Bradley-Terry type of surgical loss. 
Extensive experiments across embodied, interactive, reasoning, and planning benchmarks demonstrate that \m~achieves consistent improvements over strong baselines, with gains most pronounced on tasks requiring extended reasoning chains.
\end{abstract}

%% file: section/1_introduction.tex
\section{Introduction}

Reinforcement learning has emerged as a powerful post-training paradigm for Large Language Models, enabling capabilities ranging from planning to complex mathematical reasoning~\citep{wang2025reinforcement,kumar2025llm,park2025maporl,wang2025think}. 
Recent advances extend this paradigm to LLM agents 
with multi-turn reasoning cycles, tackling tasks such as web navigation, embodied control, and tool-augmented question answering~\citep{zeng2025reinforcing,wang2025ragen,chen2025context}. 
These settings typically adopt the ReAct framework~\citep{yao2022react}, where agents interleave reasoning thoughts with executable actions across multiple steps, with trajectories (or reasoning chains) often spanning dozens of decisions~\citep{shinn2023reflexion,yao2023tree,bai2023qwen}. 
To optimize such agents, methods like Proximal Policy Optimization (PPO) require training separate value networks to estimate advantages, adding computational overhead and potential instability in long-horizon settings~\citep{schulman2017proximal}. 
Group Relative Policy Optimization (GRPO) addresses this by estimating a notion of relative advantage through comparison within sampled trajectory groups, eliminating the need for value networks while maintaining competitive performance on multi-turn reasoning agents~\citep{guo2025deepseek}. 
Therefore extracting rich policy gradient information from finite rollouts and sparse rewards is a fundamental challenge~\citep{zheng2018learning,chiappa2023latent,wang2026perm}.   

However, existing approaches often treat sampled trajectories as independent chains and thus are oblivious to the latent correlated reward structure across them. 
We note two key limitations arising from this. First, since sparse rewards are received only at trajectory completion, all steps within each trajectory receive the same credit (despite their functional disparity), while common steps shared by multiple trajectories may receive inconsistent credit (despite their functional equivalence). Existing coarse-grained credit assignment fails to distinguish critical decision points from routine execution steps, making it difficult for agents to learn which specific reasoning choices led to success or failure~\citep{bai2024digirl,guo2025segment,zeng2025reinforcing,zhang2026learning}. 
Second, rich policy-gradient information are likely concentrated at critical decision points/steps that have a disproportional impact on reasoning outcome which may lead to suboptimal policy update~\citep{kumar2023policy,sun2023offline}.

\begin{figure}
    \centering
    \includegraphics[width=\columnwidth]{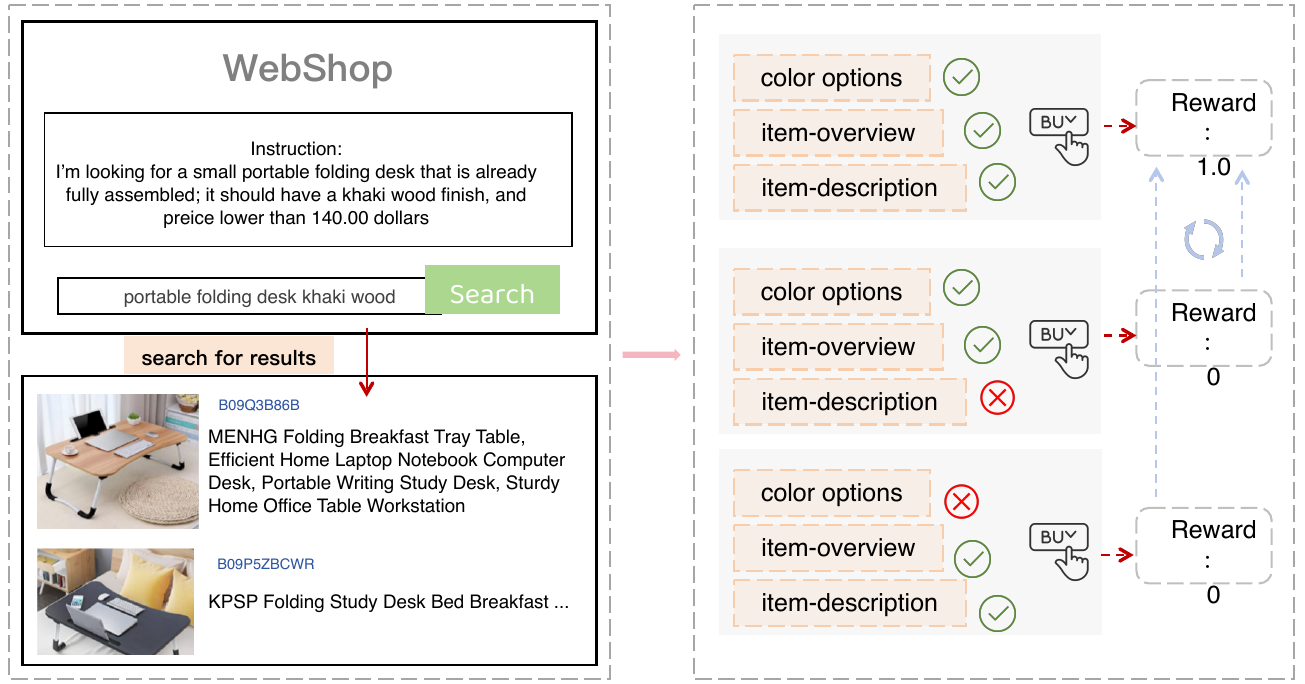}
    \caption{An illustration using WebShop Operation. Three trajectories share identical prefixes but diverge at the attribute verification step, yielding different outcomes. Existing methods with trajectory-level rewards assign inconsistent credit for the prefixes in each trajectory, while critical decision steps fail to be distinguished and utilized for policy optimization.
    }
    \label{fig:motivation}
\end{figure}

To address these limitations, we propose a novel framework, Tree-structured Self-Taught Agent Rectification~(\m), that recovers the latent reward structure across seemingly independent rollouts for improved policy optimization. 
By consolidating trajectories into a unified Cognitive Tree by merging functionally similar steps/nodes, \m~enables an Introspective Valuation mechanism that back-propagates trajectory-level rewards through the tree and constructs a variance-reduced relative advantage at each step/node. 
To capture critical decision points, \m~develops In-Context Thought Grafting joining reasoning branches sharing similar prefixes but belonging to different trajectories. 
It allows \m~to contrast successful and failed branches and enables a Surgical Policy Optimization to capitalize on the rich policy gradient information concentrated at these critical divergence steps through a Bradley-Terry type of surgical loss function. 

Our proposed \m~works without requiring additional rollouts or reward models and can be readily integrated into state-of-the-art methods including GRPO, DAPO, and GiGPO~\citep{guo2025deepseek,yu2025dapo,feng2025group}.
When multiple rollouts attempt similar tasks, they frequently traverse functionally similar intermediate states before diverging at critical junctures~\citep{zhang2025reasoning,li2025inspo,li2026right}.
A failed trajectory may contain a perfectly valid reasoning prefix that, with a single corrected decision at the divergence point, would have succeeded~\citep{huang2025tree}. 
Recognizing this latent structure could both reduce gradient variance through aggregation and precisely localize where targeted corrections are needed~\citep{yang2025treerpo}. Figure~\ref{fig:motivation} illustrates three diverging trajectories in WebShop, consolidated into a Cognitive Tree.
Not only do prefixes receive consistent credit via a variance-reduced relative advantage (by reward backpropagation in the tree), critical divergence point (at the attribute verification step) is identified and is leveraged to produce a new policy gradient component from Bradley-Terry type of surgical loss.

Comprehensive experiments across embodied, interactive, reasoning, and planning tasks demonstrate consistent improvements over GRPO and variants, with gains most pronounced on tasks requiring extended reasoning chains where trajectory overlap is frequent.
Our contributions include: (1)a Cognitive Tree construction enabling variance-reduced advantage estimation through trajectory consolidation and reward back-propagation; (2)a self-taught rectification mechanism that synthesizes step-level supervision at divergence points/steps via thought grafting; and (3) surgical policy optimization targeting critical decision steps.

%% file: section/2_rw.tex
\begin{figure*}[!t]
    \centering
    \includegraphics[width=1.0\linewidth]{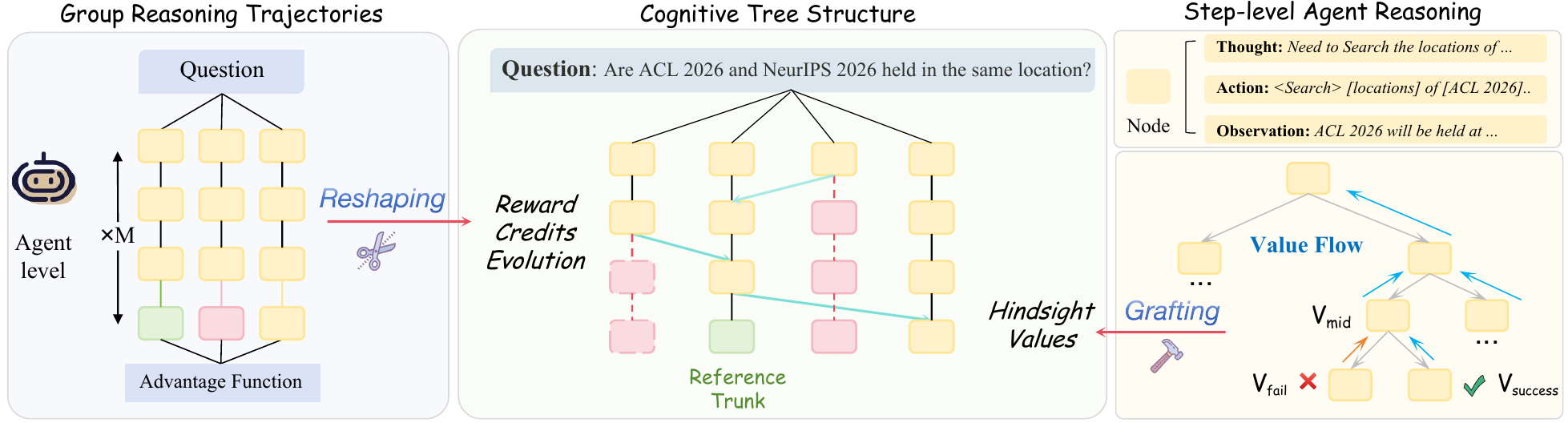}
    \caption{Overview of \m. Given $M$ sampled trajectories per task, \m~consolidates them into a Cognitive Tree by merging functionally equivalent nodes, then computes structural values via Bellman backup that propagate downstream success rates to intermediate nodes. This enables trajectory stitching where successful reasoning steps receive appropriate credit even within failed rollouts. At divergence points where children exhibit large value differences, the agent performs self-rectification by generating corrective reasoning that transfers successful logic to failed contexts, producing preference pairs for surgical policy optimization.}
    \label{fig:overview}
\end{figure*}

\section{Related Work}

\textbf{RL for LLM Agents.}
Reinforcement learning has become a key paradigm for training LLM agents \cite{zhang2025landscape,zhang2026text,zhang-etal-2025-ga}. 
PPO~\citep{schulman2017proximal} requires value networks that are computationally expensive for long-context agent tasks. 
Group-based methods like GRPO~\citep{guo2025deepseek} eliminate value networks through in-group advantage estimation, enabling more efficient training. 
Subsequent variants such as DAPO~\citep{yu2025dapo} and GiGPO~\citep{feng2025group} further improve training stability. 
These methods typically treat sampled trajectories as independent chains and suffers from the two limitations mentioned~\citep{zhang2026logicalphasetransitionsunderstanding}.

\paragraph{Self-Improvement in Reasoning.}
Sparse rewards in long-horizon tasks pose significant challenges for effective credit assignment \cite{andrychowicz2017hindsight,zhang2026semanticawarelogicalreasoningsemiotic}. While Process Reward Models offer step-level supervision, they rely on expensive human annotation~\cite{lightman2023let}, prompting valid alternatives such as automatic supervision via Monte Carlo estimation~\cite{uesato2022solving,wang2022self} or inference-time tree search~\cite{chen2025autobreach,chen2025red,ji2026strideedstrategygroundedstepwisereasoning,li2026arise}. However, these structural approaches primarily optimize inference rather than training dynamics. To enable persistent capability gains, self-improvement paradigms have emerged, ranging from inference-time verbal feedback in Reflexion~\cite{shinn2023reflexion} to bootstrapping rationales in STaR~\cite{zelikman2022star} and iterative RL 
updates~\citep{madani2026inclusionofthoughtsmitigatingpreferenceinstability,yang2026evotool}. Despite these advancements, existing methods typically operate either at the coarse trajectory level or strictly during inference \citep{yang2026tooltree,guo2026e3tirenhancedexperienceexploitation}.

%% file: section/3_method.tex
\section{Methodology}

We introduce \m~(Tree-structured Self-Taught Agent Rectification), a framework that addresses sparse supervision in multi-step agent RL through variance-reduced advantage estimation. Rather than modeling sampled trajectories as independent chains, \m~consolidates them into a Cognitive Tree that exposes shared decision structure and locates critical divergence steps. 
At these points, we enable agent self-rectification through thought grafting, where the agent synthesizes corrective reasoning by contrasting successful and failed branches. 

The overall pipeline is illustrated in Figure~\ref{fig:overview}.
The framework is structured around three core pillars: (1) constructing a Cognitive Tree that reveals shared reasoning structure and identifies divergence points across trajectories, (2) performing Introspective Valuation through Q-tree-based assignment, combined with In-Context Thought Grafting where the agent actively rectifies failed reasoning paths, and (3) applying Surgical Policy Optimization using the synthesized stepwise preferences. 

\subsection{Constructing the Cognitive Tree}
\label{sec:tree_construction}

We adopt the ReAct framework~\citep{yao2022react} where the agent engages in multi-turn Thought-Action-Observation cycles. At each step $t = 0, 1, \ldots, T-1$, the LLM generates a thought $z_t$ and a discrete textual action $a_t$ based on context $s_t$, 
then obtains observation $o_t$ through tool use. A complete $T$-step trajectory is:
\begin{equation}
\label{eq:trajectory}
\tau = \{(s_0, z_0, a_0, o_0), \ldots, (s_{T-1}, z_{T-1}, a_{T-1}, o_{T-1})\}
\end{equation}

GRPO~\citep{guo2025deepseek} samples $M$ trajectories $\{\tau_i\}_{i=1}^M$ per task and estimates advantages through in-group comparison $\hat{A}_{\text{GRPO}}(\tau_i) = ({R_i - \bar{R}}) / {\sigma_R}$,
where $\bar{R} = \frac{1}{M}\sum_{i=1}^M R_i$ and $\sigma_R$ are computed at each trajectory group, with sparse binary reward $R(\tau) \in \{0, 1\}$ received only upon task completion. 
There are two limitations: $\hat{A}_{\text{GRPO}}(\tau_i)$ remains constant across all steps, failing to distinguish critical decision points from routine steps; and treating trajectories as independent chains discards the opportunity to reduce variance when multiple trajectories share common reasoning prefixes before diverging.

To address these limitations, we consolidate independent chains into a Cognitive Tree $\mathcal{T} = (V, E)$ that serves two purposes: exposing shared decision structure to enable variance reduction, and locating divergence points where agent self-rectification is most valuable. Each node $v \in V$ represents a cognitive state characterized by tuple $(z, a, o)$. The tree is constructed by identifying functionally equivalent nodes across trajectories.

For each trajectory $\tau_i = \{v_0^{(i)}, v_1^{(i)}, \ldots, v_{T_i}^{(i)}\}$, we define node depth as its position in the trajectory, with $V_d$ denoting all nodes at depth $d$. The merging process is governed by two compatibility predicates. For functional equivalence, we compare policy distributions over next actions via KL divergence:
\begin{equation}
\label{eq:kl_divergence}
D_{\text{KL}}(v_i \| v_j) = \sum_a \pi_\theta(a|s_i) \log \frac{\pi_\theta(a|s_i)}{\pi_\theta(a|s_j)}
\end{equation}
Since exact computation over the entire vocabulary is intractable, we estimate this value via Monte Carlo sampling. 
Two nodes are considered functionally equivalent if the estimated $D_{\text{KL}}(v_i \| v_j) < \epsilon_{\text{kl}}$. For historical compatibility, we extract state-modifying actions $\mathcal{S}(v) = \{a \in \mathcal{H}(v) : \text{modifies\_state}(a)\}$ from node history. Two nodes are historically compatible if $\mathcal{S}(v_i) = \mathcal{S}(v_j)$, ensuring they operate on equivalent knowledge states.

At each depth $d$, we compute a compatibility graph $G_d = (V_d, E_d)$ where edge $(v_i, v_j)$ exists if both predicates hold, then merge nodes within each connected component. After merging, edge weights capture empirical transition frequencies:
\begin{equation}
\label{eq:edge_weight}
w(v \to v') = \frac{|\mathcal{T}(v \to v')|}{|\mathcal{T}(v)|}
\end{equation}
where $\mathcal{T}(v \to v')$ is the set of trajectory indices traversing this transition and $\mathcal{T}(v)$ is the set of all trajectories passing through $v$. The resulting tree $\mathcal{T}$ encodes shared decision structure: nodes with multiple children represent divergence points where different trajectories made different reasoning choices, providing the structural foundation for subsequent valuation and rectification.

\begin{figure}[!t]
    \centering
    \includegraphics[width=\columnwidth]{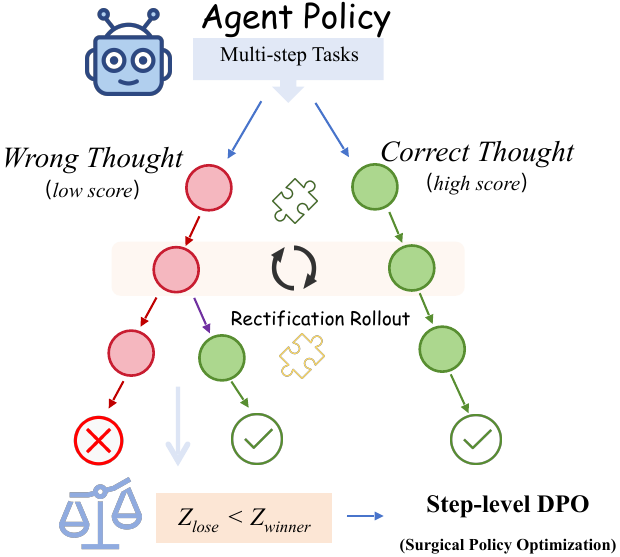}
    \caption{Thought grafting mechanism. At divergence points identified by large value spreads among children, the agent observes contrasting outcomes between successful and failed branches, then generates rectified reasoning that transfers the successful logic to the failed context. The resulting preference pairs provide step-level supervision for surgical policy optimization.}
    \label{fig:agent_grafting}
\end{figure}

\subsection{Introspective Valuation and Self-Taught Rectification}
\label{sec:valuation_rectification}

\textbf{Q-tree Valuation}
We define the Q-tree value function $Q_{\text{tree}}: V \to \mathbb{R}$ via Bellman backup:
\begin{equation}
\label{eq:qtree}
 Q_{\text{tree}}(v) = 
 \gamma \!\!\sum_{v' \in C(v)} \!\!w(v \!\to\! v') Q_{\text{tree}}(v')
\end{equation}
where $R(v) \in \{0, 1\}$ is terminal reward, $C(v)$ the children of $v$, $w(v \to v')$ the edge weights from Eq.~\eqref{eq:edge_weight}, and $Q_{\operatorname{tree}}(v)=R(v)$ if $v$ is the leaf. Based on Q-tree values, we define per-node advantages:
\begin{equation}
\label{eq:tree_advantage}
\hat{A}_{\text{tree}}(v) = \frac{Q_{\text{tree}}(v) - \bar{R}}{\sigma_R}
\end{equation}
using the same mean $\bar{R}$ and normalization ${\sigma_R}$ as GRPO. We note that $ Q_{\text{tree}}(v)$ can be computed by back-propagating the trajectory-level rewards through the tree as shown in Algorithm~1.

Consider the set of sampled trajectories $\{\tau_i\}_{i=1}^M$. For any node $v$, let $\mathcal{T}(v)$ denote the set of trajectory indices traversing $v$ (as defined in Eq.~\ref{eq:edge_weight}), with cardinality $k_v = |\mathcal{T}(v)|$. By substituting the Q-tree value definition into Eq.~\eqref{eq:tree_advantage}, we can express the node-level advantage as $\hat{A}_{\text{tree}}(v) = \frac{1}{k_v} \sum_{i \in \mathcal{T}(v)} \frac{R_i - \bar{R}}{\sigma_R}$. Assuming that trajectory completions are independent conditioned on state $v$, we derive the following properties relating node-level advantage $\hat{A}_{\text{tree}}(v)$ to GRPO relative advantage:

\begin{lemma}[Aggregation and Variance Reduction]
\label{lemma:variance_reduction}
The tree-based node advantage $\hat{A}_{\text{tree}}(v)$ satisfies:
\begin{equation}
    \hat{A}_{\text{tree}}(v) = \frac{1}{k_v} \sum_{i \in \mathcal{T}(v)} \hat{A}_{\text{GRPO}}(\tau_i),
\end{equation} and its variance satisfies:
\begin{equation}
    \text{Var}(\hat{A}_{\text{tree}}(v)) = \frac{1}{k_v} \text{Var}(\hat{A}_{\text{GRPO}}(\tau_i)).
\end{equation}
\end{lemma}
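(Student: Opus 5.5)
The argument has two parts: an algebraic identity for the aggregation claim, and a standard average-of-i.i.d. computation for the variance claim.

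\textbf{Step 1: $Q_{\text{tree}}(v)$ is the mean reward over $\mathcal{T}(v)$.} I will show by induction on height in the Cognitive Tree $\mathcal{T}$ that, with $\gamma=1$,
\[
Q_{\text{tree}}(v)=\frac{1}{k_v}\sum_{i\in\mathcal{T}(v)}R_i .
\]
For $\gamma<1$ the same argument goes through with each $R_i$ replaced by its discounted version; I will state the lemma for the undiscounted case the paper uses. At a leaf $v$, every trajectory through $v$ terminates there with reward $R(v)$, so the identity holds. For an internal node, substitute the induction hypothesis for each child $v'$ into Eq.~\eqref{eq:qtree}. By Eq.~\eqref{eq:edge_weight} the weight is $w(v\to v')=|\mathcal{T}(v\to v')|/k_v$.

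Since the merged structure is a tree, each trajectory through $v$ passes through exactly one child, so $\mathcal{T}(v\to v')=\mathcal{T}(v')$. These sets partition $\mathcal{T}(v)$. The weighted sum then telescopes:
\[
\sum_{v'} \frac{k_{v'}}{k_v}\cdot\frac{1}{k_{v'}}\sum_{i\in\mathcal{T}(v')}R_i=\frac{1}{k_v}\sum_{i\in\mathcal{T}(v)}R_i .
\]
One gap needs a convention. If a trajectory ends at an internal node, I treat it as passing to a virtual leaf child carrying its reward.

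\textbf{Step 2: the aggregation identity.} Plug Step 1 into Eq.~\eqref{eq:tree_advantage}. The normalizers $\bar R$ and $\sigma_R$ are shared across the group, so by linearity
\[
\hat A_{\text{tree}}(v)=\frac{1}{k_v}\sum_{i\in\mathcal{T}(v)}\frac{R_i-\bar R}{\sigma_R}=\frac{1}{k_v}\sum_{i\in\mathcal{T}(v)}\hat A_{\text{GRPO}}(\tau_i).
\]

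\textbf{Step 3: the variance identity.} Treat $\bar R$ and $\sigma_R$ as fixed group statistics, i.e.\ condition on them; equivalently, replace them by their population values. Under the stated assumption, the completions $\{R_i\}_{i\in\mathcal{T}(v)}$ are independent given the state $v$. They are also identically distributed, each being a rollout of $\pi_\theta$ from $v$. Hence the $\hat A_{\text{GRPO}}(\tau_i)$ for $i\in\mathcal{T}(v)$ are i.i.d., all covariance terms vanish, and
\[
\mathrm{Var}\Big(\tfrac1{k_v}\textstyle\sum_i X_i\Big)=\tfrac{1}{k_v^2}\cdot k_v\,\mathrm{Var}(X_i)=\tfrac{1}{k_v}\mathrm{Var}(X_i),
\]
where $X_i=\hat A_{\text{GRPO}}(\tau_i)$. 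Here $\mathrm{Var}(\hat A_{\text{GRPO}}(\tau_i))$ is read as the conditional variance given passage through $v$.

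\textbf{Main obstacle.} The difficulty is not the algebra but rigor in Step 3. $\bar R$ and $\sigma_R$ are computed from the same $M$ samples and so are correlated with the $R_i$. The exact $1/k_v$ factor holds only after conditioning on or fixing these statistics, or asymptotically as $M\to\infty$. I will state this explicitly as part of the independence assumption rather than attempt an exact finite-sample computation.

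A secondary subtlety is that merging by approximate KL equivalence could create a DAG rather than a tree. Step 1 relies on the children's trajectory sets partitioning $\mathcal{T}(v)$, which I take from the paper's construction guaranteeing a tree.
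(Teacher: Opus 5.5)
Your route is the paper's route: linearity for the aggregation identity, then vanishing cross-covariances for the $1/k_v$ factor. The paper simply asserts $Q_{\text{tree}}(v)=\frac{1}{k_v}\sum_{i\in\mathcal{T}(v)}R_i$ ``by substituting the definition,'' so your induction on height is a real improvement. It shows the identity needs three things: the tree property, a convention for trajectories that stop at internal nodes, and $\gamma=1$.

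On $\gamma$: the paper's hyperparameter table sets $\gamma=0.99$, so $\gamma=1$ is not the case the paper uses but the case the lemma \emph{requires}. For $\gamma<1$ your induction yields $\frac{1}{k_v}\sum_{i\in\mathcal{T}(v)}\gamma^{\ell_i}R_i$, with $\ell_i$ the number of steps from $v$ to the end of $\tau_i$. Then $\hat{A}_{\text{tree}}(v)$ is no longer the average of the $\hat{A}_{\text{GRPO}}(\tau_i)$. Your base case also silently assumes that all trajectories ending at a merged leaf share one reward $R(v)$. Applying your virtual-leaf device per trajectory removes that assumption.

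The one genuine error is in Step 3, where you say conditioning on $\bar R,\sigma_R$ is ``equivalent'' to replacing them by population values. It is not, and the conditioning version fails. Fixing $\bar R$ fixes $\sum_{j=1}^M R_j$, which makes the rewards negatively correlated. For instance, if all $M$ rewards are i.i.d.\ Bernoulli, then $\text{Cov}(R_i,R_j\mid\bar R)=-\bar R(1-\bar R)/(M-1)$. The variance ratio then becomes $\frac{1}{k_v}\cdot\frac{M-k_v}{M-1}$ instead of $\frac{1}{k_v}$ (a finite-population correction).

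The extreme case is decisive. At any node traversed by all $M$ trajectories, $\hat{A}_{\text{tree}}(v)=\frac{1}{M}\sum_{i=1}^M(R_i-\bar R)/\sigma_R=0$ identically. Its variance is therefore $0$, not $\frac{1}{M}\text{Var}(\hat{A}_{\text{GRPO}}(\tau_i))$, so with sample normalizers the lemma as stated is false there.

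So commit to the other reading. State the variance claim for deterministic normalizers (e.g.\ population values $\mu=\mathbb{E}[R]$ and $\sigma$), under which your i.i.d.\ computation is exact. Treat the sample-normalizer version as an approximation valid when $k_v\ll M$. With that change your argument is correct and more careful than the paper's, which sets all covariances to zero without noticing that $\bar R$ and $\sigma_R$ couple all $M$ rewards.
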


Lemma~1 shows that for nodes shared across $k$ trajectories, the tree-based advantage is the average of standard GRPO relative advantage received from different trajectories. Thus, it achieves $1/k$ variance reduction , i.e., $\text{Var}(\hat{A}_{\text{tree}}(v)) = \frac{1}{k} \text{Var}(\hat{A}_{\text{GRPO}})$, while remaining the same as GRPO for nodes belonging to a single trajectory when $k=1$. 
Hence, a more stable gradient signal for shared segments is provided. And Q-tree values also identify divergence points that nodes whose children exhibit large value differences indicate where reasoning quality determined outcomes.

\begin{figure}[!t]
    \centering
    \includegraphics[width=\columnwidth]{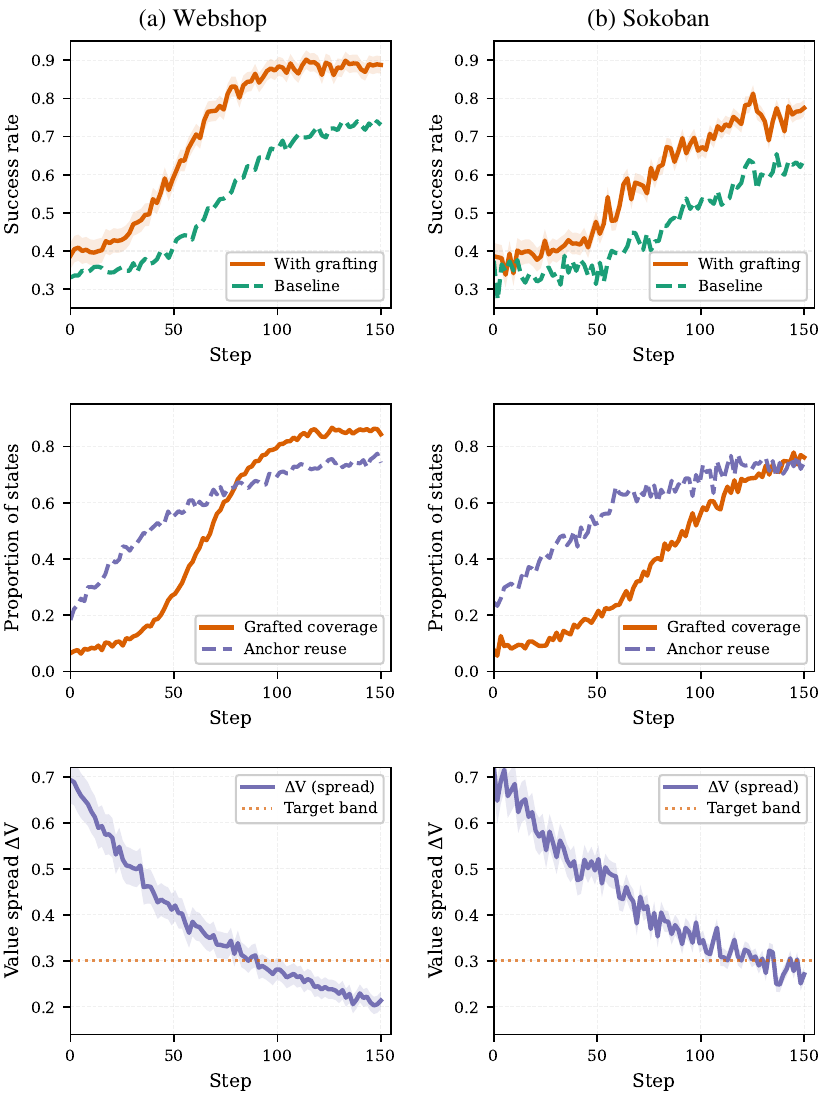}
    \caption{Training dynamics of \m~on WebShop and Sokoban, showing success rate, grafting coverage and anchor reuse, and value spread $\Delta V$ at divergence points across training iterations.}
    \label{fig:grafting_effectiveness}
\end{figure}

\textbf{Self-Taught Rectification}
Based on the cognitive tree, we identify divergent nodes as those with children exhibiting large value spreads:
\begin{equation}
\label{eq:divergence}
\Delta V(v) = \max_{v' \in C(v)} Q_{\text{tree}}(v') - \min_{v'' \in C(v)} Q_{\text{tree}}(v'') > \delta
\end{equation}
Let $V_{\text{div}}$ denote the set of divergent nodes, with $v^+, v^-$ denoting the highest-value and lowest-value children respectively. These nodes represent critical decision points where different reasoning choices led to different outcomes.

At each divergent node $v \in V_{\text{div}}$, the agent performs self-rectification through an additional rollout, as illustrated in Figure~\ref{fig:agent_grafting}. Given the shared parent context $s$ and observing that $v^+ = (z^+, a^+, o^+)$ succeeded while $v^- = (z^-, a^-, o^-)$ failed, the agent generates a rectified thought $z_{\text{rect}}$ that incorporates the successful reasoning principle from $z^+$. Since the number of divergent nodes is limited, this requires only one additional rollout pass. This creates the grafting dataset:
\begin{equation}
\label{eq:grafting_dataset}
\mathcal{D}_{\text{graft}} = \{(s, z_{\text{rect}}, z^-, t(v)) : v \in V_{\text{div}}\}
\end{equation}
where $t(v)$ is the timestep index. Each tuple provides a preference pair $(z_{\text{rect}}, z^-)$ grounded in actual outcome differences, synthesizing dense step-level supervision from sparse trajectory rewards.

Figure~\ref{fig:grafting_effectiveness} validates these mechanisms empirically. As training progresses, the value spread $\Delta V$ at divergence points decreases steadily, indicating that Q-tree valuation identifies meaningful decision boundaries where the policy gradually learns consistent behavior. Meanwhile, the increasing anchor reuse demonstrates that successful reasoning patterns discovered through grafting transfer effectively across similar contexts, confirming that thought grafting synthesizes generalizable corrections rather than instance-specific fixes.

\begin{algorithm}[!t]
\caption{\m~Training Procedure}
\label{alg:tstar}
\begin{algorithmic}[1]
\REQUIRE Policy $\pi_\theta$, tasks $\mathcal{P}$, group size $M$, thresholds $\delta$, $\epsilon_{\text{kl}}$
\STATE Initialize $\pi_{\text{ref}} \gets \pi_\theta$, $\mathcal{D}_{\text{graft}} \gets \emptyset$
\FOR{iteration $k = 1$ to $K$}
    \FOR{task $p \sim \mathcal{P}$}
        \STATE Sample $\{\tau_i\}_{i=1}^M \sim \pi_\theta(\cdot | p)$
        \STATE Build $\mathcal{T}$ via Eq.~\eqref{eq:kl_divergence}-\eqref{eq:edge_weight}, compute $Q_{\text{tree}}$ via Eq.~\eqref{eq:qtree}
        \FOR{$v \in \mathcal{T}$ where $\Delta V(v) > \delta$}
            \STATE Generate $z_{\text{rect}} \sim \pi_\theta(\cdot | s, v^+, v^-)$
            \STATE $\mathcal{D}_{\text{graft}} \gets \mathcal{D}_{\text{graft}} \cup \{(s, z_{\text{rect}}, z^-)\}$
        \ENDFOR
    \ENDFOR
    \STATE Update $\theta$ via Eq.~\eqref{eq:hybrid_objective}, $\pi_{\text{ref}} \gets \alpha \pi_{\text{ref}} + (1-\alpha) \pi_\theta$
\ENDFOR
\end{algorithmic}
\end{algorithm}

\begin{table*}[!t]
\centering
\small
\setlength{\tabcolsep}{3.5pt}
\caption{Performance on interactive and embodied tasks. Success rate (\%) for ALFWorld subtasks and score/success for WebShop. \m~consistently outperforms across all baselines and architectures.}
\label{tab:interactive_tasks}
\begin{tabular}{ll ccccccc | cc}
\toprule
\multirow{2}{*}{Type} & \multirow{2}{*}{Method} & \multicolumn{7}{c|}{ALFWorld (Success Rate \%)} & \multicolumn{2}{c}{WebShop} \\
\cmidrule(lr){3-9} \cmidrule(lr){10-11}
 & & Pick & Look & Clean & Heat & Cool & Pick2 & All & Score & Succ. \\
\midrule
\multicolumn{11}{l}{\textit{Closed-Source Model}} \\
Prompting & GPT-4o & 86.8 & 74.4 & 79.1 & 90.9 & 93.6 & 83.1 & 84.6 & 84.2 & 61.7 \\
Prompting & Gemini-1.5-Pro & 81.6 & 68.1 & 72.9 & 84.8 & 88.6 & 77.7 & 79.0 & 79.3 & 54.7 \\
\midrule
\multicolumn{11}{l}{\textit{Qwen2.5-3B-Instruct}} \\
\multirow{2}{*}{Prompting} & ReAct & 48.2 & 40.9 & 44.8 & 51.7 & 55.4 & 38.6 & 46.6 & 50.7 & 25.3 \\
 & Reflexion & 61.2 & 49.6 & 55.9 & 65.2 & 68.4 & 45.6 & 57.7 & 58.8 & 32.3 \\
\cmidrule{2-11}
RL Training & GRPO & 81.1 & 69.9 & 75.7 & 84.7 & 88.1 & 73.4 & 78.8 & 64.2 & 36.2 \\
\rowcolor{ourcolor}
RL Training & \quad + \m & 82.4 & \textbf{75.1} & 77.8 & 86.7 & 89.2 & \textbf{78.9} & \textbf{81.7} & \textbf{68.2} & \textbf{39.6} \\
\cmidrule{2-11}
RL Training & DAPO & 84.3 & 73.4 & 79.1 & 87.1 & 91.1 & 76.7 & 82.0 & 67.8 & 39.1 \\
\rowcolor{ourcolor}
RL Training & \quad + \m & \textbf{86.7} & 77.2 & \textbf{82.8} & 89.1 & 92.6 & 81.9 & \textbf{85.0} & \textbf{71.7} & \textbf{42.4} \\
\cmidrule{2-11}
RL Training & GiGPO & 90.1 & 78.9 & 85.6 & 92.1 & 95.2 & 83.8 & 87.6 & 73.1 & 41.6 \\
\rowcolor{ourcolor}
RL Training & \quad + \m & 91.1 & \textbf{83.2} & 87.8 & 94.1 & 96.3 & \textbf{87.9} & \textbf{90.1} & \textbf{76.3} & \textbf{45.1} \\
\midrule
\multicolumn{11}{l}{\textit{Phi-4-mini-instruct-3.8B}} \\
\multirow{2}{*}{Prompting} & ReAct & 46.2 & 39.2 & 42.3 & 49.6 & 53.2 & 37.3 & 44.6 & 49.8 & 23.8 \\
 & Reflexion & 58.6 & 47.1 & 54.2 & 62.1 & 66.2 & 42.9 & 55.2 & 56.4 & 30.7 \\
\cmidrule{2-11}
RL Training & GRPO & 79.7 & 67.3 & 73.3 & 82.9 & 86.8 & 70.8 & 76.8 & 62.3 & 34.1 \\
\rowcolor{ourcolor}
RL Training & \quad + \m & 81.3 & \textbf{73.2} & 76.8 & 84.4 & 88.1 & \textbf{76.8} & \textbf{80.1} & \textbf{65.2} & \textbf{38.3} \\
\cmidrule{2-11}
RL Training & DAPO & 82.6 & 71.9 & 77.9 & 85.9 & 89.9 & 75.7 & 80.7 & 66.1 & 37.4 \\
\rowcolor{ourcolor}
RL Training & \quad + \m & \textbf{85.9} & 75.6 & \textbf{81.9} & 87.8 & 91.2 & 80.6 & \textbf{83.8} & \textbf{69.2} & \textbf{41.1} \\
\cmidrule{2-11}
RL Training & GiGPO & 88.7 & 77.3 & 84.2 & 90.6 & 94.2 & 82.1 & 86.2 & 71.3 & 41.1 \\
\rowcolor{ourcolor}
RL Training & \quad + \m & 90.1 & \textbf{81.7} & 87.3 & 92.7 & 95.7 & \textbf{86.8} & \textbf{89.0} & \textbf{74.2} & \textbf{44.3} \\
\bottomrule
\end{tabular}
\end{table*}

\begin{figure*}[!t]
    \centering
    \includegraphics[width=1.0\linewidth]{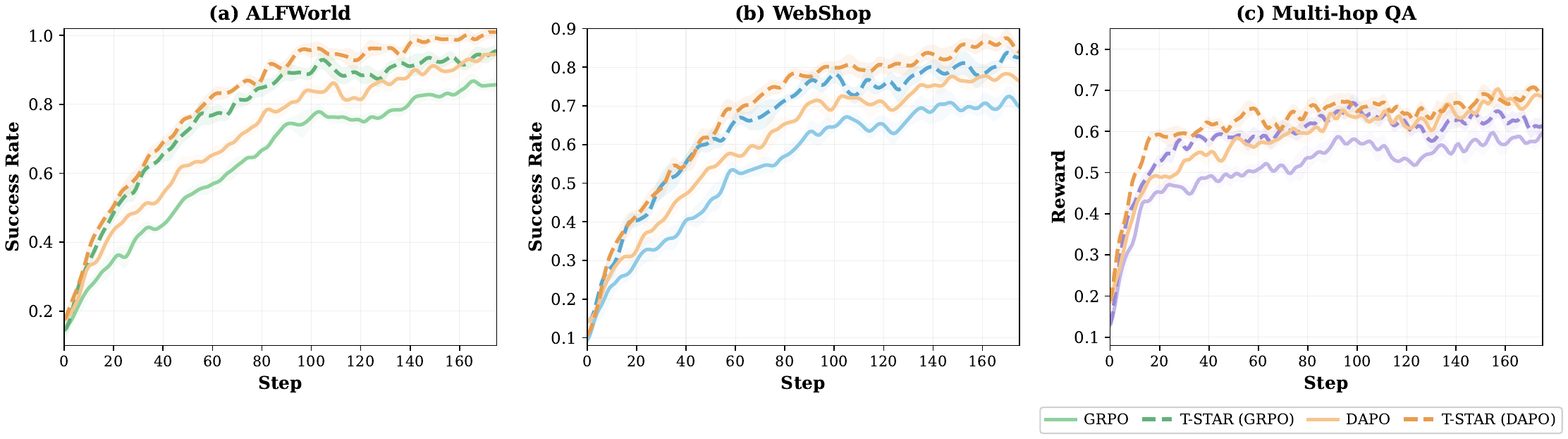}
    \caption{Training dynamics on (a) ALFWorld, (b) WebShop, and (c) Multi-hop QA. \m~achieves faster convergence and higher final performance. Shaded regions indicate standard deviation across three seeds.}
    \label{fig:training_dynamics}
\end{figure*}

\textbf{Surgical Policy Optimization}
The divergent nodes identified in the cognitive tree represent critical decision points where reasoning quality determined outcomes, which lead to the comparative structure.
The preference pairs $(z_{\text{rect}}, z^-)$ constructed at these points provide targeted supervision for policy improvement.
Therefore based on that, we optimize the overall loss function through a hybrid objective combining trajectory-level and step-level learning:
\vspace{-2pt}
\begin{equation}
\label{eq:hybrid_objective}
\mathcal{L}(\theta) = \mathcal{L}_{\text{GRPO}}(\theta) + \lambda \mathcal{L}_{\text{Surgical}}(\theta)
\end{equation}
where $\lambda$ controls the relative weight. The surgical loss implements preference learning via the Bradley-Terry model:
\begin{equation}
\label{eq:surgical_loss}
\mathcal{L}_{\text{Surgical}}(\theta) = -\mathbb{E}_{(s, z_{\text{rect}}, z^-) \sim \mathcal{D}_{\text{graft}}} \left[ \log \sigma(\beta \Delta_\theta) \right]
\end{equation}
with preference margin:
\begin{equation}
\label{eq:preference_margin}
\Delta_\theta = \log \frac{\pi_\theta(z_{\text{rect}}|s)}{\pi_{\text{ref}}(z_{\text{rect}}|s)} - \log \frac{\pi_\theta(z^-|s)}{\pi_{\text{ref}}(z^-|s)}
\end{equation}

To ensure surgical precision, where step-level optimization always plays a supporting role, gradients from $\mathcal{L}_{\text{Surgical}}$ are masked to affect only the divergence timestep $t_{\text{div}}$, while $\mathcal{L}_{\text{GRPO}}$ provides learning signal across all timesteps. 
The reference policy $\pi_{\text{ref}}$ is updated via exponential moving average to prevent distribution drift. 

Algorithm~\ref{alg:tstar} summarizes the complete training procedure. In \m~framework, the agent samples trajectories, constructs the cognitive tree to identify divergence points, operates rectified thoughts and then updates the designed policy.

%% file: section/4_result.tex
\section{Results}

\begin{table*}[!t]
\centering
\small
\setlength{\tabcolsep}{3.5pt}
\caption{Performance on search-augmented QA tasks (Exact Match). \m~shows largest gains on multi-hop reasoning. Best performance (highlighted in bold) is always achieved by \m.}
\label{tab:reasoning_qa}
\begin{tabular}{ll cccc | cccc | c}
\toprule
\multirow{2}{*}{Type} & \multirow{2}{*}{Method} & \multicolumn{4}{c|}{Single-Hop QA (EM)} & \multicolumn{4}{c|}{Multi-Hop QA (EM)} & \multirow{2}{*}{Avg.} \\
\cmidrule(lr){3-6} \cmidrule(lr){7-10}
 & & NQ & Trivia & PopQA & Avg. & Hotpot & 2Wiki & MusiQ & Bamb & \\
\midrule
\multicolumn{11}{l}{\textit{Closed-Source Model}} \\
Prompting & GPT-4o & 59.5 & 73.0 & 61.5 & 64.7 & 54.0 & 50.5 & 47.0 & 44.5 & 55.8 \\
Prompting & Gemini-1.5-Pro & 56.0 & 69.5 & 57.5 & 61.0 & 50.0 & 46.5 & 43.0 & 40.5 & 52.0 \\
\midrule
\multicolumn{11}{l}{\textit{Qwen2.5-3B-Instruct}} \\
\multirow{3}{*}{Prompting} & Direct & 23.5 & 36.0 & 25.0 & 28.2 & 19.0 & 16.5 & 13.0 & 11.5 & 21.2 \\
 & ReAct & 29.0 & 43.5 & 30.5 & 34.3 & 26.0 & 22.5 & 19.5 & 17.5 & 26.7 \\
 & Search-o1 & 32.5 & 46.5 & 33.5 & 37.5 & 28.5 & 25.5 & 21.5 & 19.5 & 29.6 \\
\cmidrule{2-11}
RL Training & GRPO & 41.0 & 54.5 & 41.5 & 45.7 & 36.5 & 32.0 & 28.0 & 25.5 & 37.1 \\
\rowcolor{ourcolor}
RL Training & \quad + \m & 43.5 & 56.0 & 43.5 & \textbf{47.7} & \textbf{40.5} & \textbf{35.5} & \textbf{30.5} & 28.0 & \textbf{39.7} \\
\cmidrule{2-11}
RL Training & DAPO & 43.0 & 56.5 & 43.0 & 47.5 & 38.5 & 34.5 & 30.0 & 27.5 & 39.1 \\
\rowcolor{ourcolor}
RL Training & \quad + \m & \textbf{45.5} & 58.0 & 45.0 & \textbf{49.5} & \textbf{42.0} & \textbf{38.0} & \textbf{33.0} & \textbf{30.5} & \textbf{41.8} \\
\cmidrule{2-11}
RL Training & GiGPO & 45.0 & 58.5 & 44.5 & 49.3 & 41.0 & 37.0 & 32.5 & 30.0 & 41.1 \\
\rowcolor{ourcolor}
RL Training & \quad + \m & 47.0 & \textbf{60.5} & \textbf{46.5} & \textbf{51.3} & \textbf{45.0} & \textbf{41.5} & \textbf{35.5} & \textbf{33.0} & \textbf{44.1} \\
\midrule
\multicolumn{11}{l}{\textit{Phi-4-mini-instruct-3.8B}} \\
\multirow{3}{*}{Prompting} & Direct & 25.5 & 38.5 & 27.5 & 30.5 & 20.5 & 17.5 & 14.0 & 12.0 & 22.4 \\
 & ReAct & 31.5 & 46.0 & 33.0 & 36.8 & 27.5 & 24.0 & 21.0 & 19.0 & 28.6 \\
 & Search-o1 & 35.0 & 49.0 & 36.0 & 40.0 & 30.0 & 27.0 & 23.0 & 21.0 & 31.5 \\
\cmidrule{2-11}
RL Training & GRPO & 43.5 & 57.5 & 44.0 & 48.3 & 38.5 & 34.0 & 30.0 & 27.5 & 39.4 \\
\rowcolor{ourcolor}
RL Training & \quad + \m & 46.0 & 59.0 & 46.0 & \textbf{50.3} & \textbf{42.5} & \textbf{37.5} & \textbf{33.0} & 30.0 & \textbf{42.0} \\
\cmidrule{2-11}
RL Training & DAPO & 45.5 & 59.5 & 46.0 & 50.3 & 40.5 & 36.5 & 32.0 & 29.5 & 41.2 \\
\rowcolor{ourcolor}
RL Training & \quad + \m & \textbf{48.0} & 61.0 & 48.0 & \textbf{52.3} & \textbf{44.5} & \textbf{40.0} & \textbf{35.0} & \textbf{32.5} & \textbf{44.0} \\
\cmidrule{2-11}
RL Training & GiGPO & 47.5 & 61.5 & 47.5 & 52.2 & 43.5 & 39.0 & 34.5 & 32.0 & 43.6 \\
\rowcolor{ourcolor}
RL Training & \quad + \m & 49.5 & \textbf{63.5} & \textbf{49.5} & \textbf{54.2} & \textbf{48.0} & \textbf{43.5} & \textbf{37.5} & \textbf{35.0} & \textbf{46.6} \\
\bottomrule
\end{tabular}
\end{table*}

\begin{table}[!t]
\centering
\small
\caption{Performance on logical planning tasks. \m~enables recovery from dead-ends through thought grafting.}
\label{tab:logical_planning}
\resizebox{\columnwidth}{!}{%
\begin{tabular}{ll ccc | cc}
\toprule
\multirow{2}{*}{Type} & \multirow{2}{*}{Method} & \multicolumn{3}{c|}{Sokoban (Success \%)} & \multicolumn{2}{c}{Blocksworld} \\
\cmidrule(lr){3-5} \cmidrule(lr){6-7}
 & & Easy & Med. & Hard & Stack & Unstack \\
\midrule
\multicolumn{7}{l}{\textit{Closed-Source Model}} \\
Prompting & GPT-4o & 78.3 & 45.4 & 17.4 & 84.6 & 89.8 \\
Prompting & Gemini-1.5-Pro & 72.7 & 40.4 & 14.4 & 79.7 & 85.4 \\
\midrule
\multicolumn{7}{l}{\textit{Qwen2.5-3B-Instruct}} \\
RL Training & PPO & 16.9 & 4.7 & 0.7 & 24.1 & 32.4 \\
\cmidrule{2-7}
RL Training & GRPO & 37.4 & 13.6 & 2.2 & 47.1 & 54.4 \\
\rowcolor{ourcolor}
RL Training & \quad + \m & \textbf{44.7} & \textbf{20.6} & \textbf{6.7} & \textbf{55.7} & \textbf{62.9} \\
\cmidrule{2-7}
RL Training & DAPO & 42.6 & 18.4 & 3.9 & 52.3 & 58.9 \\
\rowcolor{ourcolor}
RL Training & \quad + \m & \textbf{49.4} & \textbf{25.9} & \textbf{8.3} & \textbf{61.4} & \textbf{67.7} \\
\cmidrule{2-7}
RL Training & GiGPO & 47.2 & 22.8 & 6.2 & 58.3 & 64.8 \\
\rowcolor{ourcolor}
RL Training & \quad + \m & \textbf{54.4} & \textbf{29.6} & \textbf{11.2} & \textbf{67.1} & \textbf{73.4} \\
\midrule
\multicolumn{7}{l}{\textit{Phi-4-mini-instruct-3.8B}} \\
RL Training & PPO & 18.7 & 5.2 & 0.4 & 25.9 & 33.9 \\
\cmidrule{2-7}
RL Training & GRPO & 39.4 & 14.8 & 3.3 & 48.8 & 56.1 \\
\rowcolor{ourcolor}
RL Training & \quad + \m & \textbf{46.9} & \textbf{22.4} & \textbf{6.7} & \textbf{58.4} & \textbf{64.7} \\
\cmidrule{2-7}
RL Training & DAPO & 43.6 & 19.1 & 4.2 & 53.9 & 59.8 \\
\rowcolor{ourcolor}
RL Training & \quad + \m & \textbf{51.8} & \textbf{27.4} & \textbf{9.4} & \textbf{63.4} & \textbf{70.3} \\
\cmidrule{2-7}
RL Training & GiGPO & 48.6 & 23.8 & 7.3 & 59.6 & 66.8 \\
\rowcolor{ourcolor}
RL Training & \quad + \m & \textbf{56.1} & \textbf{32.4} & \textbf{12.6} & \textbf{69.8} & \textbf{76.4} \\
\bottomrule
\end{tabular}%
}
\end{table}

\subsection{Experiment Setup}
The evaluation spans three task categories with diverse reasoning requirements. For embodied and interactive environments, ALFWorld provides text-based household tasks (pick, clean, heat, cool) while WebShop requires e-commerce navigation with product search and attribute verification. Search-augmented QA includes single-hop datasets (Natural Questions, TriviaQA, PopQA) and multi-hop benchmarks (HotpotQA, 2WikiMultiHopQA, MusiQue, Bamboogle) requiring compositional reasoning across documents. Logical planning tasks include Sokoban with varying difficulty levels and Blocksworld with stacking operations.

Baselines include three group-based RL methods: GRPO with in-group trajectory comparison, DAPO with dynamic advantage scaling, and GiGPO with group-in-group optimization structures. Prompting methods (ReAct, Reflexion) and closed-source models (GPT-4o, Gemini-1.5-Pro) serve as additional references. \m~is applied on top of each RL baseline using Qwen2.5-3B-Instruct and Phi-4-mini-instruct-3.8B, with all methods adopting the ReAct framework and search tool access. Training runs for 160 steps with EMA-updated reference policy ($\alpha = 0.95$). Evaluation uses success rate for ALFWorld and Sokoban, score and success rate for WebShop, and Exact Match for QA tasks, with results averaged over three seeds.

\subsection{Main Results}
\m~demonstrates consistent improvements across all three task categories and baseline methods, as shown in Tables~\ref{tab:interactive_tasks}--\ref{tab:logical_planning}. On interactive and embodied tasks, \m~achieves 3.0--3.8\% gains on ALFWorld and 3.2--5.8\% on WebShop. The improvement is particularly pronounced on complex subtasks requiring longer action sequences, such as Pick2 and Clean, where trajectory overlap occurs more frequently and enables greater variance reduction through shared node averaging. On search-augmented QA, \m~shows its largest gains on multi-hop reasoning: 2.8--7.5\% on HotpotQA, 2.8--6.9\% on 2WikiMultiHopQA, 2.2--4.6\% on MusiQue, and 2.8--6.8\% on Bamboogle, while single-hop improvements are comparatively modest at 1.9--3.5\%. This disparity aligns with our theoretical analysis---multi-hop tasks involve more sequential decisions, creating more opportunities for trajectory sharing at early reasoning steps. On logical planning tasks, \m~achieves 5.5--7.5\% gains on easy levels, 4.5--8.5\% on medium difficulty, and 3.0--4.0\% even on hard instances where baseline methods struggle to learn meaningful policies.
The training dynamics in Figure~\ref{fig:training_dynamics} also reveal that \m~not only achieves higher final performance but also exhibits substantially more stable learning. 
The stability is evident in multi-hop QA, where sparse rewards cause baseline methods to exhibit high variance and occasional performance drops during training, while our method maintains stable improvement throughout training process.

\subsection{Analysis}

\begin{figure}[!t]
    \centering
    \includegraphics[width=1.0\linewidth]{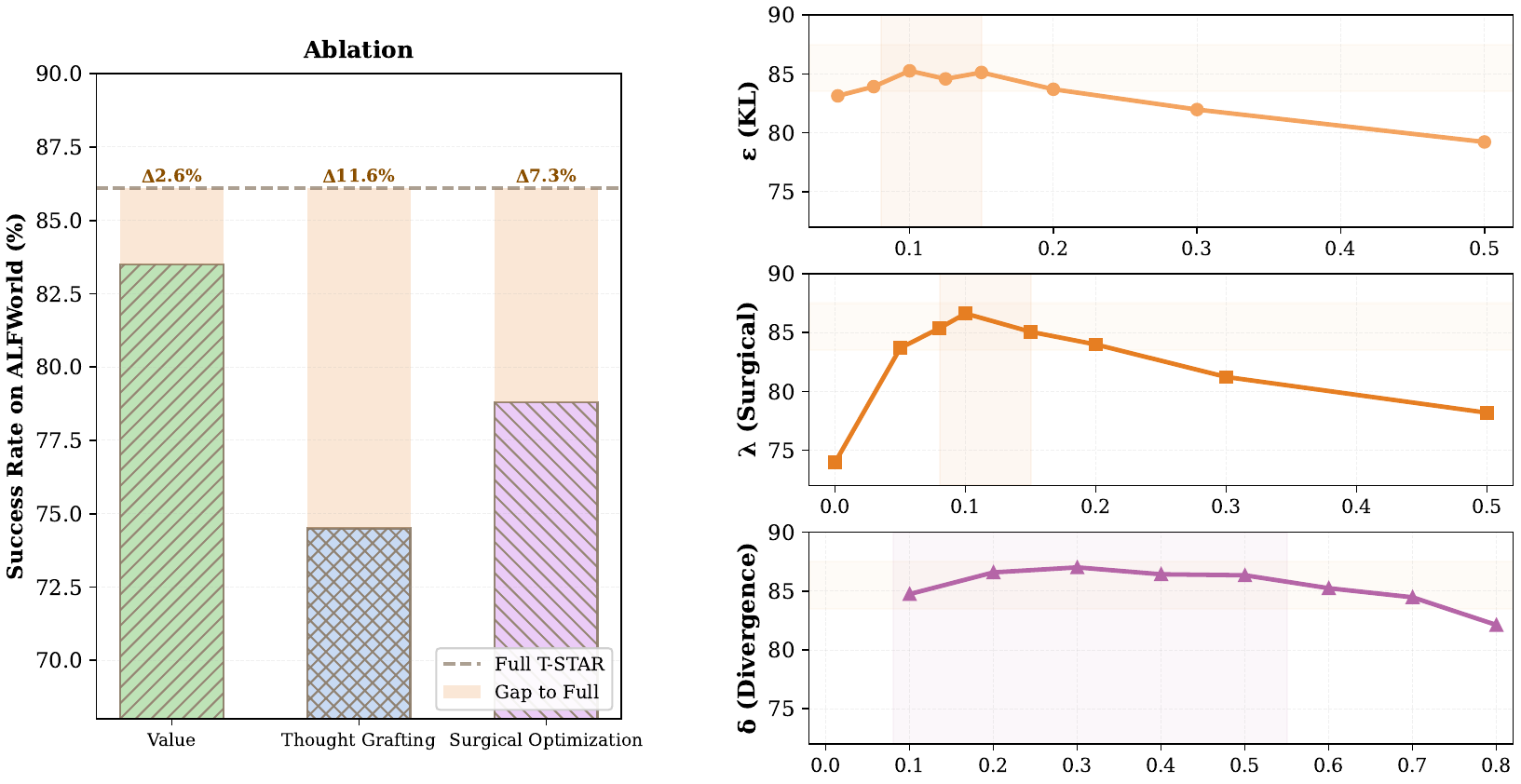}
    \caption{Ablation study on ALFWorld. Left: component contribution. Right: hyperparameter sensitivity ($\epsilon_{\text{kl}}$, $\lambda$, $\delta$).}
    \label{fig:ablation}
\end{figure}

As illustrated in Figure~\ref{fig:ablation}, the ablation experiments on ALFWorld reveal the relative contribution of each component to the overall improvement. Removing thought grafting causes the largest performance drop at 11.6\%, confirming that corrective supervision at divergence points is the most critical mechanism. Removing Q-tree valuation reduces performance by 7.3\%, demonstrating that variance-reduced credit assignment provides substantial benefit even without explicit rectification. Removing surgical optimization decreases performance by 2.6\%, indicating that while step-level preference learning contributes, the primary gains derive from improved credit assignment and corrective data synthesis rather than the specific optimization procedure. The hyperparameter sensitivity analysis in Figure~\ref{fig:ablation} shows robustness across reasonable ranges of $\epsilon_{\text{kl}}$, $\lambda$, and $\delta$, with graceful degradation outside optimal settings.

The relationship between task complexity and improvement magnitude provides additional validation for our approach. Tasks with longer reasoning chains exceeding 15 steps show 5--8\% improvements, while shorter tasks under 10 steps yield 2--4\% gains, confirming that tree-based credit assignment provides greatest value when sparse rewards must propagate through many decision points. Manual inspection of 100 grafted thoughts reveals that 83\% provide semantically meaningful corrections that address specific reasoning errors, including adding missing search constraints, correcting logical inference steps, and refining action selection based on observed outcomes.

%% file: section/5_conclusion.tex
\section{Conclusion}
In this work, we presented \m, a framework that addresses sparse supervision in multi-step agent reinforcement learning through tree-structured credit assignment and self-taught rectification. By consolidating independent rollouts into a Cognitive Tree, \m~enables variance-reduced advantage estimation while localizing critical divergence points where the agent synthesizes step-level corrective supervision through thought grafting and surgical policy optimization, without requiring additional reward models or rollouts.

Experiments across embodied, interactive, reasoning, and planning tasks demonstrate consistent improvements over a wide range of baselines, with gains pronounced on tasks requiring extended reasoning chains. 
The decreasing value spread at divergence points and increasing anchor reuse confirm that the framework produces genuine policy improvement through generalizable corrections. The core insight that independent rollouts contain latent shared structure exploitable for both variance reduction and targeted correction, suggests broader applications to sequential decision-making settings.

%% file: section/x_appendix.tex
\clearpage
\appendix

\section*{Appendix}
\section{Further Analysis}

\subsection{Theoretical Analysis}
\label{app:theory}
\begin{lemma}[Aggregation and Variance Reduction]
The tree-based node advantage $\hat{A}_{\text{tree}}(v)$ satisfies:
\begin{equation}
    \hat{A}_{\text{tree}}(v) = \frac{1}{k_v} \sum_{i \in \mathcal{T}(v)} \hat{A}_{\text{GRPO}}(\tau_i),
\end{equation} and
\begin{equation}
    \text{Var}(\hat{A}_{\text{tree}}(v)) = \frac{1}{k_v} \text{Var}(\hat{A}_{\text{GRPO}}(\tau_i)).
\end{equation}
\end{lemma}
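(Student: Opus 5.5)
The plan has two parts: an algebraic identity for the first claim, then a variance computation for the second.

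\textbf{Step 1: show $Q_{\text{tree}}(v)$ is the empirical mean of the rewards of trajectories through $v$.} Take $\gamma=1$, or absorb $\gamma$ into the rewards; otherwise the identity only holds up to a depth-dependent factor $\gamma^{T-d}$. Argue by induction on the height of $v$ in the merged tree.

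For a leaf $v$, every trajectory ending at $v$ has reward $R(v)$, so $Q_{\text{tree}}(v)=R(v)=\frac{1}{k_v}\sum_{i\in\mathcal{T}(v)}R_i$.

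For an internal node, substitute the edge weights of Eq.~\eqref{eq:edge_weight} into the Bellman backup of Eq.~\eqref{eq:qtree} and apply the induction hypothesis to each child:
\[
Q_{\text{tree}}(v)=\sum_{v'\in C(v)}\frac{|\mathcal{T}(v\to v')|}{k_v}\cdot\frac{1}{k_{v'}}\sum_{i\in\mathcal{T}(v')}R_i .
\]
This collapses to $\frac{1}{k_v}\sum_{i\in\mathcal{T}(v)}R_i$ provided two facts hold:
\begin{itemize}
\item $\mathcal{T}(v\to v')=\mathcal{T}(v')$, which is true because $\mathcal{T}$ is a tree, so each child has a unique parent;
\item the sets $\{\mathcal{T}(v')\}_{v'\in C(v)}$ partition $\mathcal{T}(v)$.
\end{itemize}
A further caveat is that a trajectory terminating exactly at an internal node must be treated as a leaf child.

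\textbf{Step 2: obtain the first claim.} Insert this empirical mean into Eq.~\eqref{eq:tree_advantage}. Since $\bar R$ and $\sigma_R$ are shared group constants, linearity gives
\[
\hat A_{\text{tree}}(v)=\frac{1}{k_v}\sum_{i\in\mathcal{T}(v)}\frac{R_i-\bar R}{\sigma_R}=\frac{1}{k_v}\sum_{i\in\mathcal{T}(v)}\hat A_{\text{GRPO}}(\tau_i).
\]

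\textbf{Step 3: the variance claim.} Invoke the stated assumption that the completions are i.i.d.\ conditioned on reaching $v$. I would treat $\bar R$ and $\sigma_R$ as fixed normalizers, conditioning on them, so that the summands $\hat A_{\text{GRPO}}(\tau_i)$ for $i\in\mathcal{T}(v)$ are i.i.d.\ with common variance. The variance of their mean is then $\frac{1}{k_v^2}\cdot k_v\,\text{Var}(\hat A_{\text{GRPO}}(\tau_i))$, which is the claimed $\frac{1}{k_v}$ scaling.

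\textbf{Main obstacle.} The hard part is justifying this independence. Strictly, $\bar R$ and $\sigma_R$ are computed from the same group and so couple the summands; the shared prefix also correlates them unconditionally. I would state the conditioning explicitly: on the node $v$ and on the group statistics, or equivalently in the large-$M$ limit where the normalizers concentrate. The variance identity is then exact conditionally and approximate otherwise. I would also flag that $\gamma<1$ breaks the exact equality in Step 1.
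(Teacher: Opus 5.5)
Your Steps 1--2 follow the paper's route: plug the empirical-mean form of $Q_{\text{tree}}$ into Eq.~\eqref{eq:tree_advantage}, then use linearity. Your induction is a real improvement, since the paper simply writes $Q_{\text{tree}}(v)=\frac{1}{k_v}\sum_{i\in\mathcal{T}(v)}R_i$ as if it were the definition and never uses the unique-parent and partition structure you identify. Two small corrections apply there.
\begin{itemize}
\item The discount enters as a per-trajectory factor $\gamma^{T_i-d(v)}$ that changes with the node. It therefore cannot be absorbed into the $R_i$ appearing in $\hat{A}_{\text{GRPO}}$, and the first identity genuinely requires $\gamma=1$. The paper uses $\gamma=0.99$ and drops it silently.
\item The unique-parent property $\mathcal{T}(v\to v')=\mathcal{T}(v')$ should be stated as a hypothesis rather than derived. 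Per-depth merging only compares $\mathcal{S}(v)$ and next-action distributions, so it can merge two nodes whose parents were not merged. That produces a DAG, on which the identity itself can fail.
\end{itemize}

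The genuine gap is in Step 3. You are right that the paper's assertion $\text{Cov}(X_i,X_j\mid v)=0$ is unjustified, because every $X_i=(R_i-\bar R)/\sigma_R$ contains the same in-group statistics. However, conditioning on $\bar R$ and $\sigma_R$ does not decouple the summands; it constrains them, because $\bar R$ is a function of the very rewards being averaged. So your claim that the identity becomes exact conditionally is false.

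Concretely, if $\mathcal{T}(v)=\{1,\dots,M\}$, then $\hat{A}_{\text{tree}}(v)=(\bar R-\bar R)/\sigma_R\equiv 0$. Its variance is $0$ whether or not you condition, whereas $\text{Var}(\hat{A}_{\text{GRPO}}(\tau_i)\mid\bar R,\sigma_R)>0$ for any non-degenerate group.

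More generally, take a fixed index set of size $k_v$, i.i.d.\ rewards, and $\sigma_R$ frozen at a constant. In-group centering then gives $\text{Var}(\hat{A}_{\text{tree}}(v))=\frac{1}{k_v}\cdot\frac{M-k_v}{M-1}\,\text{Var}(\hat{A}_{\text{GRPO}}(\tau_i))$. Hence the large-$M$ limit is not equivalent to your conditioning. It recovers the $1/k_v$ law only when $k_v/M\to 0$, which excludes exactly the heavily shared prefixes the lemma is advertised for, and $M=8$ is nowhere near that regime.

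The fix is to state the variance claim with deterministic normalizers $\mu,\sigma$ that are not computed from the same group, such as population values or statistics from an independent batch. Then the variables $(R_i-\mu)/\sigma$ for $i\in\mathcal{T}(v)$ are i.i.d.\ given $v$, and your computation is exact. With the in-group $\bar R,\sigma_R$, the equality holds only approximately.
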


\begin{proof}
\textbf{1. Derivation of Unbiased Aggregation}

Let $R_i \triangleq R(\tau_i)$ denote the terminal reward for trajectory $i$. The GRPO advantage is given by $\hat{A}_{\text{GRPO}}(\tau_i) = \sigma_R^{-1}(R_i - \bar{R})$.
Substituting the definition of $Q_{\text{tree}}(v)$ into the node advantage formulation:
\begin{equation}
\begin{split}
    \hat{A}_{\text{tree}}(v) &= \frac{1}{\sigma_R} \left( \left[ \frac{1}{k_v} \sum_{i \in \mathcal{T}(v)} R_i \right] - \bar{R} \right) \\
    &= \frac{1}{\sigma_R} \left( \frac{1}{k_v} \sum_{i \in \mathcal{T}(v)} R_i - \frac{1}{k_v} \sum_{i \in \mathcal{T}(v)} \bar{R} \right) \\
    &= \frac{1}{k_v} \sum_{i \in \mathcal{T}(v)} \frac{R_i - \bar{R}}{\sigma_R} \\
    &= \frac{1}{k_v} \sum_{i \in \mathcal{T}(v)} \hat{A}_{\text{GRPO}}(\tau_i).
\end{split}
\end{equation}
This establishes the linear equivalence between the node-level advantage and the mean trajectory-level advantage.

\textbf{2. Variance Analysis via Conditional Independence}

Let $X_i$ be the random variable representing the advantage $\hat{A}_{\text{GRPO}}(\tau_i)$ for a trajectory passing through $v$. We analyze the variance of the estimator $\hat{A}_{\text{tree}}(v)$ conditioned on the prefix node $v$.
Since the policy $\pi_\theta$ generates rollouts stochastically, for any distinct pair $i, j \in \mathcal{T}(v)$ with $i \neq j$, the completions are independent given $v$. Thus, the covariance term vanishes:
\begin{equation}
    \text{Cov}(X_i, X_j \mid v) = 0, \quad \forall i \neq j.
\end{equation}
Let $\Sigma^2 \triangleq \text{Var}(X_i)$ be the variance of the single-trajectory advantage. Expanding the variance of the sum:
\begin{equation}
\begin{split}
    &\text{Var}(\hat{A}_{\text{tree}}(v)) = \text{Var}\left( \frac{1}{k_v} \sum_{i \in \mathcal{T}(v)} X_i \right) \\
    &= \frac{1}{k_v^2} \left( \sum_{i \in \mathcal{T}(v)} \text{Var}(X_i) + \sum_{i \neq j} \text{Cov}(X_i, X_j) \right) \\
    &= \frac{1}{k_v^2} \left( \sum_{i=1}^{k_v} \Sigma^2 + 0 \right) \\
    &= \frac{k_v \Sigma^2}{k_v^2} = \frac{1}{k_v} \text{Var}(\hat{A}_{\text{GRPO}}(\tau_i)).
\end{split}
\end{equation}
Consequently, for any shared node where $k_v > 1$, the variance of the gradient estimation signal is strictly reduced.
\end{proof}

\subsection{Complexity Analysis}

We analyze the computational overhead of \m~relative to standard GRPO, decomposing into three components and providing both theoretical bounds and empirical measurements.

\subsubsection{Tree Construction}

The tree construction phase involves two sub-operations: functional equivalence testing and graph-based merging.

\textbf{Functional Equivalence Testing.} For $M$ trajectories of average length $T$, we must evaluate pairwise KL divergence at each depth $d \in \{0, 1, \ldots, T-1\}$. At depth $d$, let $n_d$ denote the number of distinct nodes before merging. Computing exact KL divergence $D_{\text{KL}}(v_i \| v_j) = \sum_a \pi_\theta(a|s_i) \log \frac{\pi_\theta(a|s_i)}{\pi_\theta(a|s_j)}$ requires enumerating the action space $|A|$, which is intractable for large vocabularies.

We adopt Monte Carlo approximation: sample $K$ actions from $\pi_\theta(\cdot|s_i)$ and estimate
\begin{equation}
\hat{D}_{\text{KL}}(v_i \| v_j) = \frac{1}{K} \sum_{k=1}^{K} \log \frac{\pi_\theta(a_k|s_i)}{\pi_\theta(a_k|s_j)}, \quad a_k \sim \pi_\theta(\cdot|s_i).
\end{equation}
This requires $K$ forward passes per node pair. With $n_d \leq M$ nodes at each depth, the total cost across all depths is $O(M^2 T K)$ forward passes. We set $K=16$, which provides sufficient accuracy for threshold-based equivalence decisions.

\textbf{Graph-Based Merging.} At each depth, we construct a compatibility graph $G_d = (V_d, E_d)$ where edges connect functionally equivalent and historically compatible nodes. Finding connected components via union-find requires $O(n_d^2 \cdot \alpha(n_d))$ operations, where $\alpha$ is the inverse Ackermann function. Summing across depths yields $O(M^2 T \cdot \alpha(M))$, which is effectively linear in practice.

\textbf{Historical Compatibility.} Checking $S(v_i) = S(v_j)$ requires comparing action histories of length at most $d$. With efficient hashing of state-modifying action sequences, this adds $O(M^2 T^2)$ hash comparisons in the worst case, though average-case complexity is lower due to early rejection.

\subsubsection{Q-tree Computation}

The Q-tree value computation (Eq. 5) performs a single bottom-up traversal of the constructed tree. Let $|V|$ denote the total number of nodes after merging.

\textbf{Bellman Backup.} Each internal node $v$ computes $Q_{\text{tree}}(v) = \gamma \sum_{v' \in C(v)} w(v \to v') Q_{\text{tree}}(v')$, requiring $O(|C(v)|)$ operations. Summing over all nodes:
\begin{equation}
\sum_{v \in V} |C(v)| = |E| \leq |V| - 1 = O(MT).
\end{equation}
Thus, the total Bellman backup cost is $O(MT)$ arithmetic operations.

\textbf{Edge Weight Computation.} Edge weights $w(v \to v') = |T(v \to v')| / |T(v)|$ are computed during tree construction by maintaining trajectory membership sets. Using efficient set operations, this adds $O(MT)$ overhead.

\textbf{Divergence Identification.} Computing $\Delta V(v) = \max_{v'} Q_{\text{tree}}(v') - \min_{v''} Q_{\text{tree}}(v'')$ for each internal node requires $O(|C(v)|)$ comparisons per node, totaling $O(MT)$ across the tree.

\subsubsection{Thought Grafting}

For each divergent node $v \in V_{\text{div}}$, generating a rectified thought requires one forward pass through the policy model.

\textbf{Number of Divergent Nodes.} The divergence threshold $\delta$ controls the granularity of rectification. Let $p_{\text{div}}$ denote the fraction of internal nodes satisfying $\Delta V(v) > \delta$. With $\delta = 0.3$, we observe $p_{\text{div}} = 0.12$ on ALFWorld and $p_{\text{div}} = 0.09$ on WebShop. With approximately $MT/2$ internal nodes, we have $|V_{\text{div}}| \approx p_{\text{div}} \cdot MT / 2$.

\textbf{Generation Cost.} Each rectified thought generation involves conditioning on the shared context $s$, the successful branch $v^+$, and the failed branch $v^-$. The input length is approximately $3L$ where $L$ is the average context length. Generation produces thoughts of average length $l_z$, requiring $O(l_z)$ autoregressive steps.

\subsubsection{Overall Complexity Comparison}

Table~\ref{tab:complexity} summarizes the computational complexity of each component.

\begin{table}[h]
\centering
\small
\begin{tabular}{lcc}
\toprule
Component & Time Complexity & Dominant Cost \\
\midrule
Standard GRPO & $O(MT \cdot L)$ & Forward/backward passes \\
\midrule
\multicolumn{3}{l}{\textit{\m~Additional Overhead}} \\
KL Estimation & $O(M^2 T K)$ & Forward passes (inference) \\
Graph Merging & $O(M^2 T)$ & CPU operations \\
Q-tree Backup & $O(MT)$ & Arithmetic operations \\
Thought Grafting & $O(p_{\text{div}} \cdot MT \cdot l_z)$ & Forward passes (generation) \\
\bottomrule
\end{tabular}
\caption{Computational complexity breakdown. $M$: trajectories per task, $T$: average trajectory length, $K$: MC samples, $L$: context length, $l_z$: thought length, $p_{\text{div}}$: divergence fraction.}
\label{tab:complexity}
\end{table}

The dominant additional cost is KL estimation during tree construction. However, this involves inference-only forward passes without gradient computation, which are significantly cheaper than training forward-backward passes.

\subsubsection{Empirical Runtime Analysis}

We measure actual training time on ALFWorld with Qwen2.5-3B-Instruct.

\begin{table}[h]
\centering
\small
\begin{tabular}{lccc}
\toprule
Method & Time/Iter (s) & Total Time (h) & Relative \\
\midrule
GRPO & 42.3 & 1.88 & 1.00$\times$ \\
GRPO + \m & 51.8 & 2.30 & 1.22$\times$ \\
\midrule
\multicolumn{4}{l}{\textit{\m~Breakdown}} \\
Tree Construction & 5.2 & -- & -- \\
Q-tree + Divergence & 0.8 & -- & -- \\
Thought Grafting & 3.5 & -- & -- \\
\bottomrule
\end{tabular}
\caption{Runtime comparison on ALFWorld (160 training iterations).}
\label{tab:runtime}
\end{table}

\begin{table*}[!t]
\centering
\small
\begin{tabular}{lccc}
\toprule
Model Size & GRPO Time (s) & \m~Overhead (s) & Relative Overhead \\
\midrule
1.5B & 28.4 & 8.2 & 28.9\% \\
3B & 42.3 & 9.5 & 22.5\% \\
7B & 78.6 & 11.3 & 14.4\% \\
\bottomrule
\end{tabular}
\caption{Scaling behavior of \m~overhead with model size on ALFWorld.}
\label{tab:model_scaling}
\end{table*}

\subsubsection{Scalability Considerations}

\textbf{Scaling with Trajectory Count $M$.} The $O(M^2)$ term in tree construction becomes significant for large $M$. For $M > 16$, we recommend hierarchical clustering: first group trajectories by coarse features (e.g., first action), then apply full equivalence testing within groups. This reduces the complexity to $O(M^2/g + g \cdot (M/g)^2) = O(M^2/g)$ where $g$ is the number of groups.

\textbf{Scaling with Trajectory Length $T$.} Longer trajectories increase tree depth but typically exhibit more sharing at early depths, maintaining reasonable $|V|$. The linear dependence on $T$ in most components ensures scalability. Empirically, we observe that the effective tree size grows sublinearly with $T$ due to increased merging opportunities at shallow depths.

\textbf{Scaling with Model Size.} Forward pass cost scales with model parameters, affecting both KL estimation and thought grafting. For larger models, the relative overhead of \m~decreases since the fixed-cost components (graph operations, Q-tree computation) remain constant. Table~\ref{tab:model_scaling} presents scaling behavior across different model sizes.

\section{Experimental Details}
\label{app:exp_details}

\subsection{Dataset Statistics}

We evaluate \m~across 11 datasets spanning four task categories. Table~\ref{tab:dataset_stats} summarizes the statistics of all benchmarks. The Avg. Steps column indicates the average number of reasoning steps required per task, which directly correlates with the potential for trajectory sharing in our cognitive tree construction.

\begin{table}[t]
\centering
\small
\setlength{\tabcolsep}{3pt}
\begin{tabular}{llrrr}
\toprule
Category & Dataset & Train & Test & Steps \\
\midrule
\multirow{2}{*}{Embodied} & ALFWorld & 3,321 & 140 & 12.4 \\
& WebShop & 10,587 & 500 & 8.6 \\
\midrule
\multirow{3}{*}{Single-hop} & NQ & 79,168 & 3,610 & 3.2 \\
& TriviaQA & 78,785 & 11,313 & 2.8 \\
& PopQA & -- & 14,267 & 2.5 \\
\midrule
\multirow{4}{*}{Multi-hop} & HotpotQA & 90,447 & 7,405 & 6.4 \\
& 2WikiMH & 15,000 & 12,576 & 5.8 \\
& MusiQue & 19,938 & 2,417 & 7.2 \\
& Bamboogle & -- & 125 & 4.6 \\
\midrule
\multirow{2}{*}{Planning} & Sokoban & 5,000 & 1,000 & 18.5 \\
& Blocksworld & 4,000 & 800 & 14.2 \\
\bottomrule
\end{tabular}
\caption{Dataset statistics across four task categories.}
\label{tab:dataset_stats}
\end{table}

\subsection{Hyperparameter Settings}

Table~\ref{tab:hyperparams} provides complete hyperparameter settings for \m. The KL threshold $\epsilon_{\text{kl}} = 0.25$ balances merging granularity: smaller values create sparser trees with less sharing, while larger values risk merging semantically different states. The divergence threshold $\delta = 0.3$ controls the selectivity of thought grafting.

\begin{table}[t]
\centering
\small
\setlength{\tabcolsep}{2pt}
\begin{tabular}{lc|lc}
\toprule
Param & Val & Param & Val \\
\midrule
KL threshold $\epsilon_{\text{kl}}$ & 0.25 & Learning rate & 5e-6 \\
MC samples $K$ & 16 & Batch size & 32 \\
Trajectories $M$ & 8 & Training steps & 160 \\
Discount $\gamma$ & 0.99 & EMA coef. $\alpha$ & 0.95 \\
Divergence $\delta$ & 0.3 & Max seq. len & 4096 \\
Surgical wt. $\lambda$ & 0.15 & Max steps & 20 \\
Temperature $\beta$ & 0.1 & & \\
\bottomrule
\end{tabular}
\caption{Hyperparameter settings for \m.}
\label{tab:hyperparams}
\end{table}

\subsection{Baseline Implementations}

For baseline implementations, we ensure fair comparison by maintaining consistent configurations across all methods.

\noindent\textbf{GRPO.} We implement GRPO with trajectory-level advantage normalization using group size $M=8$. The clipping parameter is set to $\epsilon = 0.2$.

\noindent\textbf{DAPO.} DAPO applies dynamic advantage scaling with clip range $[0.8, 1.2]$ and temperature annealing from 1.0 to 0.5 over training.

\noindent\textbf{GiGPO.} GiGPO uses outer group size 4 and inner group size 2, maintaining 8 trajectories per task.

\noindent\textbf{Prompting Baselines.} ReAct and Reflexion use 3-shot demonstrations. Reflexion maintains a verbal feedback buffer with maximum 3 reflections. All prompting methods use greedy decoding.

\subsection{Evaluation Metrics}

We adopt task-specific evaluation metrics:
\begin{itemize}[leftmargin=*,nosep,topsep=2pt]
    \item \textbf{ALFWorld}: Success rate (\%) within 30 steps.
    \item \textbf{WebShop}: Score (0-100) based on attribute matching, and success rate (\%).
    \item \textbf{QA Tasks}: Exact Match (EM) after normalization.
    \item \textbf{Sokoban}: Success rate (\%) within step limits (easy: 20, medium: 40, hard: 60).
    \item \textbf{Blocksworld}: Success rate (\%) for target configuration.
\end{itemize}

\subsection{Infrastructure}

All experiments are conducted on NVIDIA A100 80GB GPUs with DeepSpeed ZeRO Stage 2. We use vLLM for efficient inference during trajectory sampling. 

\section{Additional Results}
\label{app:additional_results}

\subsection{Per-Task Breakdown on ALFWorld}

Table~\ref{tab:alfworld_detailed} shows per-task performance on ALFWorld. The largest gains are on Pick2 (+4.5\%) and Look (+3.5\%), involving longer action sequences.

\begin{table}[t]
\centering
\small
\setlength{\tabcolsep}{2.5pt}
\begin{tabular}{l|cccccc|c}
\toprule
Method & Pk & Lk & Cl & Ht & Co & Pk2 & All \\
\midrule
GRPO & 79.5 & 67.0 & 73.5 & 83.0 & 87.0 & 71.0 & 76.8 \\
+\m & 82.0 & 70.5 & 76.0 & 85.5 & 89.0 & 75.5 & 79.8 \\
\midrule
DAPO & 83.0 & 71.0 & 77.0 & 86.0 & 90.0 & 75.0 & 80.3 \\
+\m & 85.5 & 74.0 & 80.5 & 88.5 & 92.0 & 79.0 & 83.3 \\
\midrule
GiGPO & 89.0 & 77.0 & 84.0 & 92.0 & 95.0 & 82.0 & 86.5 \\
+\m & 91.5 & 80.5 & 87.0 & 94.5 & 97.0 & 85.5 & 89.3 \\
\bottomrule
\end{tabular}
\caption{Per-task results on ALFWorld (Qwen2.5-3B). Pk=Pick, Lk=Look, Cl=Clean, Ht=Heat, Co=Cool.}
\label{tab:alfworld_detailed}
\end{table}

\subsection{Cognitive Tree Statistics}

Table~\ref{tab:tree_stats} presents cognitive tree statistics. The merge ratio reflects trajectory sharing degree. Higher merge ratios correlate with larger improvements.

\begin{table}[t]
\centering
\small
\setlength{\tabcolsep}{3pt}
\begin{tabular}{lcccc}
\toprule
Dataset & Depth & Nodes & Merge & $|V_{\text{div}}|$ \\
\midrule
ALFWorld & 12.4 & 47.2 & 0.52 & 5.8 \\
WebShop & 8.6 & 38.5 & 0.44 & 4.2 \\
HotpotQA & 6.4 & 28.3 & 0.45 & 3.6 \\
Sokoban-E & 12.8 & 52.1 & 0.58 & 6.4 \\
Sokoban-H & 24.6 & 89.4 & 0.41 & 8.2 \\
\bottomrule
\end{tabular}
\caption{Cognitive tree statistics. Merge ratio = 1 - (nodes after / nodes before merging).}
\label{tab:tree_stats}
\end{table}

\subsection{Grafting Quality Analysis}

We manually analyze 100 grafting instances per benchmark. Table~\ref{tab:grafting_quality} summarizes results.

\begin{table}[t]
\centering
\small
\setlength{\tabcolsep}{3pt}
\begin{tabular}{lccc}
\toprule
Dataset & Valid & Error & Success \\
\midrule
ALFWorld & 91\% & 83\% & 72\% \\
WebShop & 88\% & 79\% & 68\% \\
HotpotQA & 85\% & 76\% & 65\% \\
Sokoban & 82\% & 71\% & 58\% \\
\bottomrule
\end{tabular}
\caption{Grafting quality. Valid: semantically correct; Error: identifies mistake; Success: leads to completion.}
\label{tab:grafting_quality}
\end{table}

\section{Case Study}
\label{app:case_study}

We provide examples illustrating how \m~identifies divergence points and performs thought grafting.

\subsection{WebShop Example}

\noindent\textbf{Task:} \textit{I'm looking for a small portable folding desk that is already fully assembled; khaki wood finish, price $<$ \$140.}

\noindent\textbf{Shared Prefix (Merged Nodes):}
\begin{enumerate}[nosep,leftmargin=*,topsep=2pt]
    \item \texttt{search[portable folding desk khaki]}
    \item \texttt{click[B09Q3B86B]} $\rightarrow$ Product page
    \item \texttt{click[item-description]}
\end{enumerate}

\noindent\textbf{Divergence Point} ($\Delta V = 0.72$):

\vspace{2pt}
\noindent\fcolorbox{green!60}{green!5}{%
\parbox{0.95\linewidth}{\small
\textbf{Successful Branch ($v^+$):}\\
\textbf{Thought}: Description says ``assembly required'', doesn't match ``fully assembled''. Should check other products.\\
\textbf{Action}: \texttt{click[back to search]}\\
\textbf{Outcome}: Finds pre-assembled desk. \textbf{R=1.0}
}}

\vspace{4pt}
\noindent\fcolorbox{red!60}{red!5}{%
\parbox{0.95\linewidth}{\small
\textbf{Failed Branch ($v^-$):}\\
\textbf{Thought}: Desk looks good, right color, \$89.99 under budget.\\
\textbf{Action}: \texttt{click[khaki wood][buy now]}\\
\textbf{Outcome}: Buys unassembled desk. \textbf{R=0.0}
}}

\vspace{4pt}
\noindent\fcolorbox{blue!60}{blue!5}{%
\parbox{0.95\linewidth}{\small
\textbf{Grafted Thought ($z_{\text{rect}}$):}\\
Description mentions ``assembly required''. User requested ``fully assembled''. This doesn't meet requirements. Should return to search for pre-assembled option.
}}

\subsection{HotpotQA Example}

\noindent\textbf{Question:} \textit{Were the directors of ``Jaws'' and ``E.T.'' born in the same decade?}

\noindent\textbf{Divergence Point} ($\Delta V = 0.65$) after first search:

\vspace{2pt}
\noindent\fcolorbox{green!60}{green!5}{%
\parbox{0.95\linewidth}{\small
\textbf{Successful ($v^+$):}\\
\textbf{Thought}: Found Spielberg directed Jaws. Need to verify E.T. director and birth year.\\
\textbf{Action}: \texttt{search[E.T. director]}\\
\textbf{Outcome}: Confirms director, finds 1946. \textbf{R=1.0}
}}

\vspace{4pt}
\noindent\fcolorbox{red!60}{red!5}{%
\parbox{0.95\linewidth}{\small
\textbf{Failed ($v^-$):}\\
\textbf{Thought}: Spielberg born 1946. Same era movies, probably same decade.\\
\textbf{Action}: \texttt{finish[Yes]}\\
\textbf{Outcome}: Premature answer. \textbf{R=0.0}
}}

\vspace{4pt}
\noindent\fcolorbox{blue!60}{blue!5}{%
\parbox{0.95\linewidth}{\small
\textbf{Grafted ($z_{\text{rect}}$):}\\
Confirmed Spielberg directed Jaws (born 1946). Question asks about both films. Should not assume same director without verification. Search E.T. director first.
}}